\documentclass{article}

\usepackage[preprint]{template/neurips_2026}

\usepackage[utf8]{inputenc} 
\usepackage[T1]{fontenc}    
\usepackage{hyperref}       
\usepackage{url}            
\usepackage{booktabs}       
\usepackage{amsfonts}       
\usepackage{nicefrac}       
\usepackage{microtype}      
\usepackage{xcolor}         

\usepackage{caption}
\usepackage{graphicx}
\usepackage{multirow}
\usepackage[table,xcdraw]{xcolor}
\usepackage{array}
\usepackage{booktabs}
\usepackage{enumitem}
\usepackage{makecell}
\usepackage{wrapfig}
\usepackage{tikz}
\definecolor{oliveGreen}{HTML}{7A9A8D}
\definecolor{coptMainBg}{HTML}{EEF3EF}
\definecolor{brickRed}{HTML}{B55245}
\definecolor{defaultBg}{HTML}{EAF2FB}
\definecolor{coptAuxBg}{HTML}{F7F3EC}
\definecolor{DeepSlate}{HTML}{2F3E46}
\definecolor{MutedTeal}{HTML}{356859}
\definecolor{Auburn}{HTML}{7A3E2D}
\definecolor{DarkSteelBlue}{HTML}{2B4C7E}
\hypersetup{
  colorlinks=true,
  linkcolor=DeepSlate,
  anchorcolor=DeepSlate,
  citecolor=Auburn,
  urlcolor=DarkSteelBlue,
}
\usepackage{pifont}

\usepackage{algorithm}
\usepackage[noend]{algpseudocode} 
\algrenewcommand\algorithmicrequire{\textbf{Input:}}
\algrenewcommand\algorithmicensure{\textbf{Output:}}
\algrenewcommand\textproc{\textsc} 
\algrenewcommand{\algorithmiccomment}[1]{\hfill\textcolor{oliveGreen}{// \textit{#1}}}
\algrenewcommand\alglinenumber[1]{\footnotesize #1}

\usepackage{xspace}
\usepackage{amsmath,amssymb}
\usepackage{amsthm}
\usepackage{bbm}
\usepackage[most]{tcolorbox} 
\usepackage{cleveref}

\newcommand{\copt}{\textsc{CopT}\xspace}

\newtheorem{theorem}{Theorem}
\newtheorem{assumption}{Assumption}

\usepackage{textcomp}  
\usepackage{scalerel}  
\def\hf{\scalerel*{\includegraphics{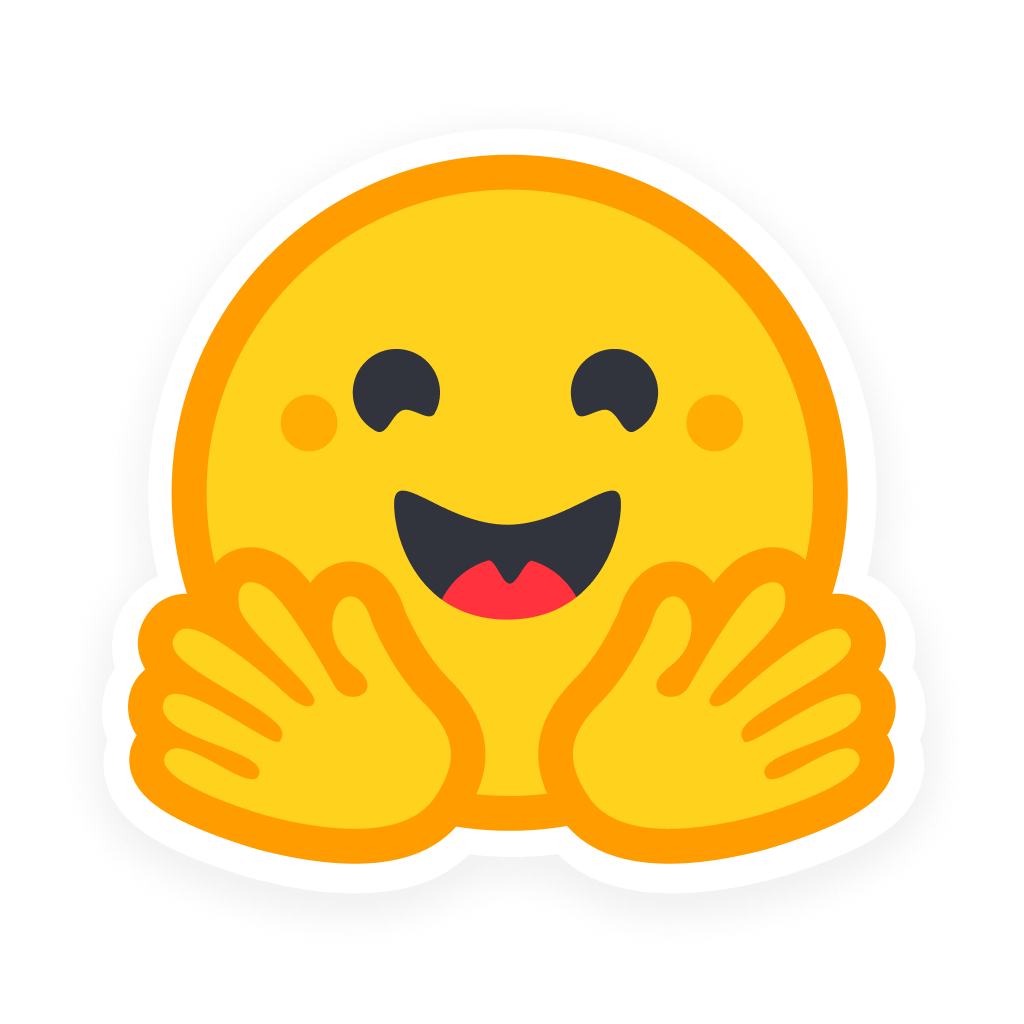}}{\textrm{\textbigcircle}}}
\def\github{\scalerel*{\includegraphics{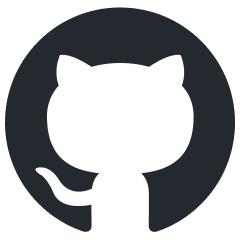}}{\textrm{\textbigcircle}}}

\newcommand{\deltasize}{\fontsize{6.5pt}{6.5pt}\selectfont}

\newcommand{\accup}[2]{#1{\deltasize\textcolor{oliveGreen}{~(+#2)}}}
\newcommand{\accdown}[2]{#1{\deltasize\textcolor{blue!70!black}{~(-#2)}}}
\newcommand{\tokup}[2]{#1{\deltasize\textcolor{oliveGreen}{~(+#2\%)}}}
\newcommand{\tokdown}[2]{#1{\deltasize\textcolor{blue!70!black}{~(#2\%)}}}
\newcommand{\accupbr}[2]{%
  \shortstack[c]{#1\\[-1pt]{\scriptsize\textcolor{oliveGreen}{(+#2)}}}%
}

\newcommand{\tokdownbr}[2]{%
  \shortstack[c]{#1\\[-1pt]{\scriptsize\textcolor{blue!70!black}{(#2\%)}}}%
}
\newcommand{\tokupbr}[2]{%
  \shortstack[c]{#1\\[-1pt]{\scriptsize\textcolor{oliveGreen}{(+#2\%)}}}%
}

\AddToHook{cmd/appendix/before}{\crefalias{section}{appendix}}

\title{CopT: Contrastive On-Policy Thinking with Continuous Spaces for General and Agentic Reasoning}

\author{Dachuan Shi$^1$, Hanlin Zhu$^2$, Xiangchi Yuan$^1$, Wanjia Zhao$^3$, Kejing Xia$^1$, \\ \textbf{Wen Xiao}$^4$, \textbf{Wenke Lee}$^1$\\
$^1$Georgia Tech \ \ $^2$UC Berkeley \ \ $^3$Stanford University \ \ $^4$Microsoft  \\
}

\begin{document}

\maketitle

\vspace{-1em}

\begin{figure}[h]
    \captionsetup{font={small}}
    \centering
    \includegraphics[width=1.0\linewidth]{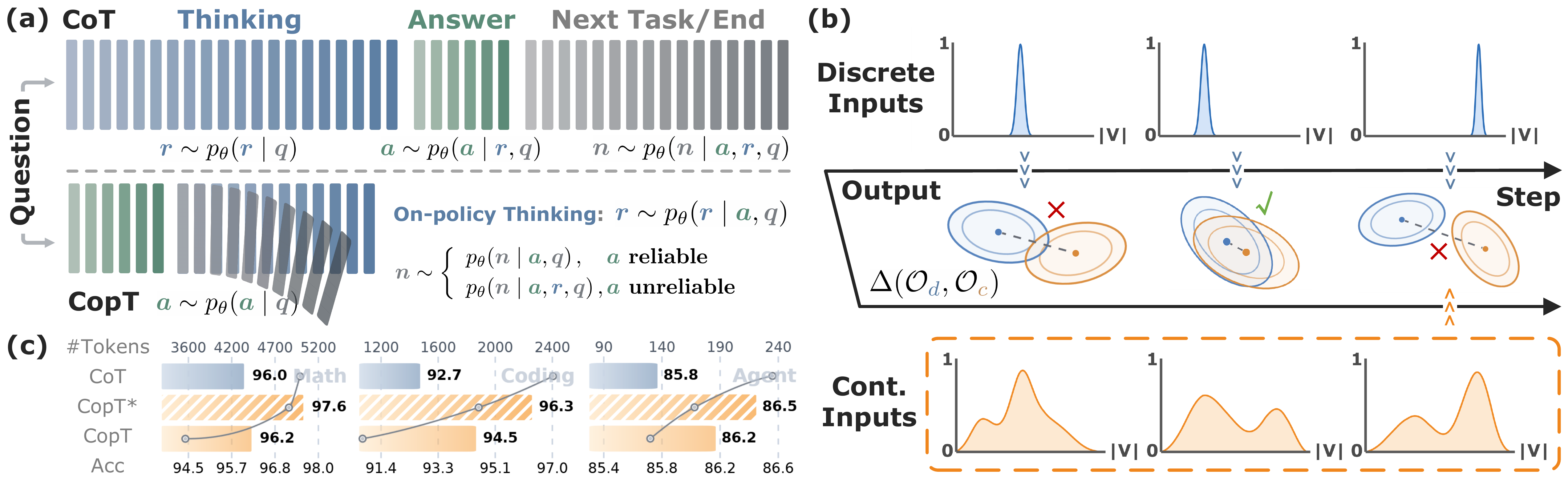}
    \caption{(a) Conceptual comparison between \textsc{CoT} thinking and \textsc{CopT} on-policy thinking. (b) \copt contrasts the output distributions under discrete and continuous inputs. (c) \copt improves peak accuracy, marked by $^{*}$, across mathematics, coding, and agentic reasoning tasks and nearly halves token usage at matched accuracy.}
    \label{fig:acc}
\end{figure}

\begin{abstract}
  Chain-of-thought (\textsc{CoT}) is a standard approach for eliciting reasoning capabilities from large language models (LLMs). However, the common \textsc{CoT} paradigm treats thinking as a prerequisite for answering, which can delay access to plausible answers and incur unnecessary token costs even when the model is able to identify an answer before extended thinking, a behavior known as performative reasoning. In this paper, we introduce \copt, a reformulated reasoning pipeline that reverses the usual order of thinking and answering. Instead of thinking before answering, \copt first elicits a draft answer and then invokes subsequent on-policy thinking conditioned on its own draft answer for reflection and correction. To assess whether the draft answer should be trusted, \copt recasts continuous embeddings as inference-time contrastive verifiers. Specifically, it contrasts the model's support for the same generated tokens under discrete-token inputs and continuous-embedding inputs, yielding a sequence-level reverse KL estimator for answer reliability. Our analysis shows that under certain assumptions, the expected estimate equals the mutual information between the unresolved latent state and the emitted answer token, explaining why it captures answer-relevant uncertainty rather than arbitrary uncertainty in the latent state. When the answer is deemed insufficiently reliable, \copt performs further on-policy thinking, where a second KL estimator dynamically controls draft-answer visibility, preserving useful partial information while reducing the risk of being misled by unreliable content. Across mathematics, coding, and agentic reasoning tasks, \copt improves peak accuracy by up to $23\%$ and reduces token usage by up to $57\%$ at comparable or higher accuracy, without any additional training. The code is available at \href{https://github.com/sdc17/CopT}{https://github.com/sdc17/CopT}.
\end{abstract}

\section{Introduction}

Reasoning has become one of the central capabilities of large language models (LLMs), enabling them to solve increasingly complex tasks in mathematics~\citep{deepmind2023imo,jaech2024openai,openai2025o3,team2025kimi}, coding~\citep{qwen_qwen3_coder_next_tech_report, hui2024qwen2, zhu2024deepseek, roziere2023code}, and agentic~\citep{anthropic2025opus45,anthropic2025claude4_system_card,qwen3.5,patil2024gorilla} settings. A common approach for eliciting reasoning behavior is chain-of-thought (\textsc{CoT}), where LLMs generate intermediate natural-language steps before producing the final answer~\citep{wei2022chain, yao2023tree, goyalthink, pfau2024let, qwen2.5, qwq32b}. By making the thinking process explicit, \textsc{CoT} brings substantial improvements to complex tasks that demand advanced reasoning capabilities~\citep{yang2025qwen3, meta2025llama, meta_llama_llama3_3_model_card, guo2025deepseek, agarwal2025gpt, abdin2025phi, pmlr-v267-shi25b, abouelenin2025phi}.

A key limitation of the predominant \textsc{CoT} paradigm is that it treats thinking as a prerequisite for answering. It works by first producing a thorough reasoning trace and only then arriving at the answer. However, recent work has revealed that, for many queries, LLMs exhibit performative reasoning~\citep{boppana2026reasoning, huang2026does, lindsey2026emergent, chen2025rmda}, in which they insist on completing the reasoning process even when they have already internally identified a plausible answer.

We propose \copt, a reversed reasoning paradigm. Rather than thinking before answering, an LLM first drafts an answer and then performs thinking for reflection and correction afterward. This reformulated paradigm provides earlier access to answers and avoids unnecessary token consumption when the model is able to identify a plausible answer before thorough thinking. 

Reversing the usual order of thinking and answering raises two key challenges: when a draft answer should be trusted, and how it should be used during later thinking. We show that continuous embeddings, previously used for generation in latent \textsc{CoT} methods~\citep{hao2024training, xu2025softcot}, can be recast as inference-time verifiers for this reversed reasoning setting. By contrasting the model's support for the same generated tokens under discrete-token and continuous-embedding inputs, they provide measurable criteria for draft reliability estimation and controlled utilization.

Latent \textsc{CoT}, where LLMs generate continuous embeddings instead of committing to discrete tokens during the thinking process~\citep{hao2024training, shen2025codi, zhu2025reasoning, xu2025softcot, tan2025think}, is a distinct line of recent work in parallel to explicit \textsc{CoT}. These approaches are motivated by the observation that latent \textsc{CoT} offers higher representational bandwidth
per step~\citep{zhu2025survey,yu2026latent}. Continuous embeddings can encode richer information by preserving uncertainty, whereas discrete tokens retain only the information carried by the sampled token at each step~\citep{li2025implicit, chen2025reasoning}.

Instead of using continuous embeddings for generation during the thinking process, as in existing latent reasoning methods, \copt keeps thinking explicit while recasting continuous embeddings as contrastive verifiers at inference time. This allows \copt to retain the readability of explicit \textsc{CoT} while simultaneously leveraging uncertainty information, as in latent \textsc{CoT}. Meanwhile, it avoids issues that may arise when continuous embeddings are directly used for generation, such as unseen representations~\citep{zhang2025softthinking}, de-diversification~\citep{liang2025singlethread}, and drifting into noise~\citep{shi2025swireasoning}.

To address the first challenge of determining when a draft answer should be trusted, \copt introduces a contrastive mechanism with continuous spaces to estimate the reliability of the draft answer. Specifically, it contrasts the model's support for its own generated answer under two types of input representations: explicit inputs in discrete spaces and continuous embeddings constructed from next-token distributions and cached online along with explicit token generation. This contrast yields a sequence-level reverse KL estimator that indicates the reliability of the draft answer. If the draft answer appears sufficiently reliable, it will be accepted by the model directly. Otherwise, \copt triggers a subsequent on-policy thinking process to either correct or support the answer.

The second challenge of how to use the draft answer arises once on-policy thinking is triggered. A draft answer deemed insufficiently reliable may still contain useful partial information, but exposing it throughout the entire later thinking process risks misleading the model. To control the visibility of the draft answer during on-policy thinking across thinking steps, \copt periodically calculates a second KL estimator within each thinking chunk using a similar contrastive mechanism with continuous spaces. In this way, \copt allows the model to use the draft answer when it appears helpful, while hiding it when the current thinking process becomes unstable.

Beyond the empirical results, we further provide a latent-state interpretation of the proposed contrastive estimator. Under a local mixture-prefix view, the continuous prefix preserves uncertainty over an unresolved latent reasoning state $S$, while the emitted answer token is denoted by $A$. We show that under a mixture-linear assumption (see \Cref{sec:theory}), the expected estimate equals the mutual information $I(S;A)$, indicating that the estimator measures answer-relevant uncertainty rather than the entropy of the latent state itself. This explains why the score grows only when uncertainty preserved by continuous embeddings changes the model's support for its own generated tokens, supporting its use for draft reliability estimation.

Our contributions are summarized as follows:
\begin{itemize}[leftmargin=*,labelsep=0.5em]
  \item We propose \copt, a training-free reasoning pipeline that enables LLMs to start with a draft answer and invoke on-policy thinking conditioned on it when necessary, thereby allowing earlier access to answers and selective correction afterward.
  \item We introduce a contrastive mechanism that measures the discrepancy between the model’s support for the same generated tokens under discrete and continuous inputs, which helps identify potential errors in draft answers and modulates their exposure during the thinking process.
  \item We extensively validate the effectiveness of \copt on mathematics, coding, and agentic reasoning tasks across multiple benchmarks, model architectures, and scales, demonstrating consistent gains over \textsc{CoT} baselines in both accuracy and token efficiency.
\end{itemize}

\section{Related Work}

\paragraph{Reasoning LLMs and explicit reasoning.}
Reasoning with explicit natural-language traces has become a standard way to improve the performance of LLMs on complex tasks~\citep{openai2025o3, anthropic2025opus45, comanici2025gemini}. Early work elicits such behavior through prompting~\citep{wei2022chain,wang2022self,yao2023tree}. More recent LLMs typically gain reasoning capabilities through reinforcement learning~\citep{shao2024deepseekmath,yu2025dapo,liu2025understanding} or multi-stage post-training that combines supervised fine-tuning with reinforcement learning~\citep{liu2024deepseek,pmlr-v235-shi24e,yang2025qwen3,ma2025learning,yuan2025mitigating,yuan2025superficial}. Representative open-source examples include DeepSeek-R1~\citep{guo2025deepseek} and Qwen3~\citep{yang2025qwen3}, which show that large-scale reinforcement learning and long-\textsc{CoT} post-training can elicit strong reasoning behaviors. Following works~\citep{zeng2025glm, liu2025deepseek, yuan2026behavior, qwen_qwen3_coder_next_tech_report} further demonstrate the effectiveness of explicit reasoning across diverse mathematics, coding, and agentic tasks. Despite these advances, reasoning LLMs typically retain the standard thinking-before-answering order. In contrast, \copt reverses this order by first eliciting a draft answer and invoking on-policy thinking conditioned on it when the answer appears insufficiently reliable.

\paragraph{Latent reasoning with continuous embeddings.}
A parallel line of work explores latent reasoning in continuous spaces, where LLMs operate on continuous embeddings instead of committing to discrete tokens at every reasoning step~\citep{hao2024training, su2025token, zhu2025reasoning}. These methods are motivated by the observation that continuous representations can encode information from the full next-token distribution, while discrete decoding retains only the sampled token. Latent reasoning is mainly achieved by adapting LLMs into continuous spaces via modified pretraining~\citep{zeng2025pretraining,tack2025llm} or fine-tuning ~\citep{shen2025codi, xu2025softcot, tan2025think,wei2025sim,zhu2025emergence,xia2026metastate} objectives. Recent training-free methods~\citep{liang2025singlethread,xu2026thinking} instead construct continuous embeddings directly during inference, such as Soft-Thinking~\citep{zhang2025softthinking} and SwiReasoning~\citep{shi2025swireasoning}. These prior latent reasoning methods mainly use continuous embeddings as a medium for generation. In contrast, \copt recasts them as inference-time verifiers. This allows \copt to use uncertainty information preserved by continuous embeddings as in latent reasoning while retaining the readability of explicit reasoning.

\section{Methodology}

\label{sec:method}

As shown in Fig.~\ref{fig:method}, \copt reformulates LLM reasoning into two reversed stages: a leading \emph{draft-answer stage} and, when necessary, a trailing \emph{on-policy thinking stage}. The key insight is to first elicit an early-stage answer at low cost, estimate its reliability with a normalized sequence-level reverse KL estimator, and selectively trigger on-policy thinking with dynamic access to the draft answer.

\begin{figure}[t]
    \captionsetup{font={small}}
    \centering
    \includegraphics[width=1.0\linewidth]{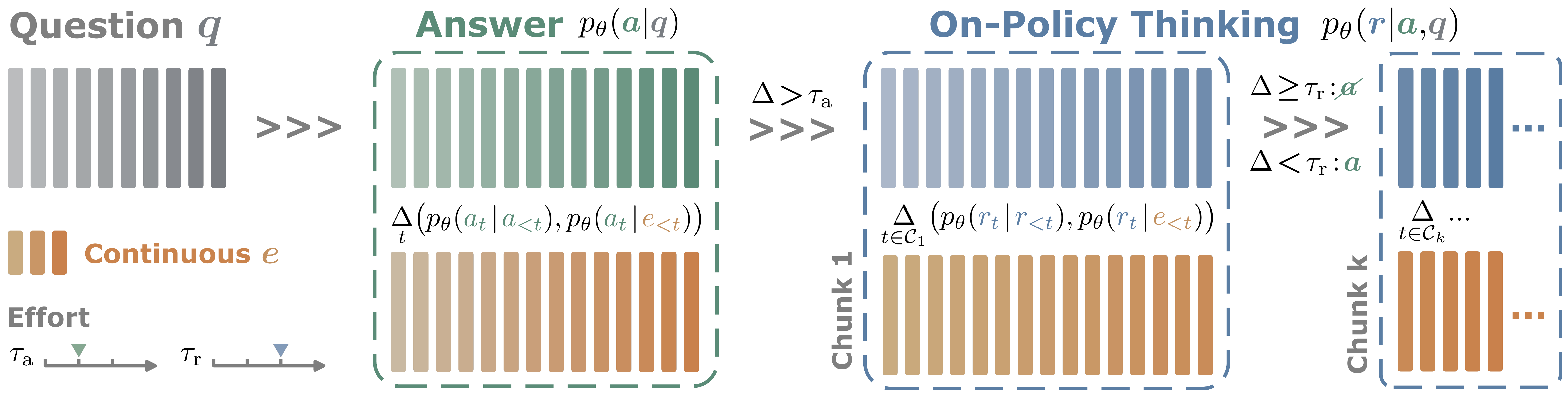}
    \vspace{-1em}
    \caption{\copt starts with a draft answer and performs on-policy thinking conditioned on it. It contrasts the model's support for the same chosen tokens under discrete and continuous inputs to estimate draft answer reliability, and during thinking, chunk by chunk, to determine the visibility of the draft answer across time steps.}
    \label{fig:method}
\end{figure}

\subsection{Reliability Estimator of the Draft Answer}

\paragraph{Draft answer elicitation. } 

Let $p_\theta$ denote the model with parameters $\theta$. Let $E \in \mathbb{R}^{|\mathcal{V}| \times d}$ be the input embedding matrix of the model, where $\mathcal{V}$ is the vocabulary and $d$ is the hidden size. For any token $v \in \mathcal{V}$, $E(v)\in\mathbb{R}^d$ denotes its embedding. Given a question token sequence $q = (q_1,\dots,q_m)$, instead of allowing the model to think thoroughly, we force it to output \texttt{</think>} at the beginning and go straight into its answering mode.

\paragraph{Reliability estimation. } To estimate how likely it is that a subsequent thorough thinking process will be required for correcting potential errors, we introduce a normalized sequence-level reverse KL estimator $\kappa_{\text{a}}$. Let the draft phase generate tokens
$a = (a_1,\dots,a_{T_a})$. During draft answer generation, for each generated token $a_t$, we cache two items calculated from the next-token distributions:
\[
p_t := p_\theta(a_t \mid q, a_{<t}),
\qquad
e_t := \sum_{v \in \mathcal{V}} p_\theta(v \mid q, a_{<t}) E(v).
\]
Here $p_t$ is the chosen-token probability, and $e_t$ is a continuous embedding obtained as the probability-weighted average over the vocabulary, which preserves uncertainty information at each step.

After the draft answer $a$ is completed, we calculate $\kappa_{\text{a}}$ to estimate its reliability. More specifically, we compare the student discrete-prefix distribution induced by the original inputs against the teacher continuous-prefix distribution in which inputs are replaced with cached continuous embeddings. 

For $t \in \{1, 2, \ldots T_a\}$, the student probability is simply $p_t$, and all the teacher probabilities are obtained in parallel using a single forward pass with the modified input embeddings. The teacher probabilities of the original answer tokens are gathered at the corresponding output positions:
\[
p_t^{e}
:=
p_\theta(a_t \mid q, e_{<t}).
\]
This defines the continuous-prefix probability of the sampled draft answer as $p_\theta^{e}(a|q) = \prod_{t=1}^{T_a} p_t^e$.

We define the estimator
\[
\kappa_{\text{a}}(a_{1:T_a})
:=
\frac{1}{T_a}
\sum_{t=1}^{T_a}
\left[
\log p_\theta(a_t \mid q, a_{<t})
-
\log p_\theta(a_t \mid q, e_{<t})
\right].
\]
For any fixed draft length $T_a$, $\kappa_{\text{a}}$ is an unbiased estimator of the normalized sequence-level reverse KL divergence between the two distributions:
\[
\mathbb{E}_{a_{1:T_a}\sim p_\theta(\cdot \mid q)}
\bigl[\kappa_{\text{a}}(a_{1:T_a})\bigr]
=
\frac{1}{T_a}
D_{\mathrm{KL}}
\!\left(
p_\theta(\cdot \mid q)
\,\middle\|\,
p_\theta^{e}(\cdot \mid q)
\right).
\]

\subsection{On-Policy Thinking Conditioned on the Draft Answer}

\paragraph{On-policy thinking elicitation. } 

A large $\kappa_{\text{a}}$ indicates that answer context becomes substantially less supported with teacher-forced continuous embeddings, \textit{i.e.}, the answer may be unreliable given additional uncertainty information. Let $\tau_{\text{a}}$ denote the reliability threshold. When $\kappa_{\text{a}} > \tau_{\text{a}}$, we force the model to output \texttt{<think>} after the draft answer and move into a subsequent thinking process.

\paragraph{Visibility controls for draft answers.}

For draft answers that are deemed insufficiently reliable, the goal of on-policy thinking is to use any beneficial information when necessary, while avoiding being misled by unreliable draft content. Let the on-policy thinking phase generate tokens
\[
r = (r_{T_a+1}, r_{T_a+2}, \dots, r_{T_a+T_r}).
\]
We partition the thinking trajectory into chunks of length $C$. Let the $k$-th chunk start at position
\[
s_k := T_a + 1 + (k-1)C,
\]
and span positions $[s_k, s_k + C - 1]$. Let
\[
m_k := \mathbbm{1}\{a \text{ is visible in chunk } k\}
\]
denote the visibility mask for the $k$-th chunk, and define the visibility-conditioned draft input as
\[
a^{(m_k)}
:=
\begin{cases}
a, & m_k = 1, \\
\varnothing, & m_k = 0.
\end{cases}
\]
For each generated token $r_t$ in chunk $k$, we cache
\[
p_t
:=
p_\theta(r_t \mid q, a^{(m_k)}, r_{T_a+1:t-1}),
\qquad
e_t
:=
\sum_{v \in \mathcal{V}} p_\theta(v \mid q, a^{(m_k)}, r_{T_a+1:t-1}) E(v).
\]
Similarly, we calculate a second KL estimator $\kappa_{\text{r}}^{(k)}$ on the current chunk whenever it reaches a predefined length $C$ to decide whether the previous draft answer should become visible in the next chunk. \footnote{$\kappa_{\text{a}}$ and $\kappa_{\text{r}}$ are calculated on the already generated sequence, and therefore incur only small overhead once the corresponding chosen-token probabilities and continuous embeddings are cached online during generation.} For $t \in \{s_k, \ldots,  s_k+C-1\}$, the student chosen-token probability is simply $p_t$, and all the teacher probabilities within the chunk are obtained in parallel using a single forward pass with the modified intra-chunk input embeddings:
\[
p_t^{e}
:=
p_\theta(r_t \mid q, a^{(m_k)}, r_{T_a+1:s_k-1}; e_{s_k:t-1}).
\]
We define the estimator
\begin{multline*}
\kappa_{\text{r}}^{(k)}(r_{s_k:s_k+C-1})
:=
\frac{1}{C}
\sum_{t=s_k}^{s_k+C-1}
\bigl[
\log p_\theta(r_t \mid q, a^{(m_k)}, r_{T_a+1:t-1})
- \\
\log p_\theta(r_t \mid q, a^{(m_k)}, r_{T_a+1:s_k-1}; e_{s_k:t-1})
\bigr].
\end{multline*}
For a fixed chunk length $C$, $\kappa_{\text{r}}^{(k)}$ is an unbiased estimator of the normalized sequence-level reverse KL between the two chunk-level continuation distributions:
\begin{multline*}
\mathbb{E}_{r_{s_k:s_k+C-1}\sim p_\theta(\cdot \mid q, a^{(m_k)}, r_{T_a+1:s_k-1})}
\bigl[\kappa_{\text{r}}^{(k)}(r_{s_k:s_k+C-1})\bigr] = \\
\frac{1}{C}\,
D_{\mathrm{KL}}\!\Bigl(
p_\theta(\cdot \mid q, a^{(m_k)}, r_{T_a+1:s_k-1})
\big\|\,
p_\theta^{e}(\cdot \mid q, a^{(m_k)}, r_{T_a+1:s_k-1})
\Bigr).
\end{multline*}

$\kappa_{\text{r}}^{(k)}$ estimates the reliability of the current thinking chunk. A large $\kappa_{\text{r}}^{(k)}$ suggests that the current chunk is unstable and more vulnerable to misleading information in the draft answer. Let $\tau_{\text{r}}$ denote the stability threshold. After each complete chunk, we update the visibility of the draft answer for the next chunk by
\[
m_{k+1} =
\begin{cases}
1, & \kappa_{\text{r}}^{(k)} < \tau_{\text{r}}, \\
0, & \text{otherwise}.
\end{cases}
\]

\section{Theoretical Analysis}
\label{sec:theory}

In this section, we provide a theoretical interpretation to demonstrate the effectiveness of our \copt method under certain assumptions. We focus on the reliability of our proposed reverse-KL estimator. 
Our analysis highlights a key property of the reverse-KL estimator: it measures \emph{answer-relevant uncertainty}, rather than uncertainty over latent reasoning states themselves.


For convenience, we analyze a single answer position. Note that all probability distributions below are conditioned on the question (or equivalently, the prompt) $q$ and the previous output prefix, which we omit when the context is clear. Let $\mathcal{S}$ be a finite set of latent reasoning states. A discrete output prefix (along with the prompt) commits the model to one latent state, while a continuous prefix may represent a superposition of several possible states.

Let $\mathcal{A}$ be a finite set of all possible answers (or equivalently, the next token). When the prefix is discrete, for each latent state $s\in\mathcal{S}$, let
\[
P_s(a)
=
p_\theta(a\mid s),  \quad \forall a \in \mathcal{A}
\]
denote the next-token distribution induced by committing to $s$, where $\theta$ denotes the model parameters. When the prefix is continuous, we make the following assumption on the output distribution.

\begin{assumption}[Mixture-linear continuous prefix]
\label{assump:mixture_soft_prefix}
Let $w$ be a distribution over $\mathcal{S}$ such that the discrete draft prefix commits to a latent state
$S\sim w$,
and then emits the answer
$A\sim P_S$.
Let $e_w$ denote the corresponding continuous prefix which is determined by the distribution $w$. We assume the next-token distribution conditioned on a continuous prefix $e_w$ is determined by 
\[
p_\theta(a\mid e_w)
=
\bar P_w(a)
:=
\sum_{s\in\mathcal{S}}w(s)P_s(a), \quad \forall a \in \mathcal{A}.
\]
\end{assumption}

Note that for the emitted answer token $A$, the local reverse-KL contribution is
\[
\kappa(S,A)
=
\log P_S(A)-\log \bar P_w(A).
\]

\begin{theorem}[Reverse KL measures answer-relevant uncertainty]
\label{thm:answer_relevant_uncertainty_main}
Under Assumption~\ref{assump:mixture_soft_prefix},
\[
\mathbb{E}_{S\sim w,\,A\sim P_S}
[\kappa(S,A)]
=
\sum_{s\in\mathcal{S}}w(s)
D_{\mathrm{KL}}
\!\left(
P_s
\,\middle\|\,
\bar P_w
\right) = 
I(S;A),
\]
where $I(S;A)$ is the mutual information between the latent state $S$ and the emitted answer token $A$ under the joint distribution
\[
\Pr(S=s,A=a)=w(s)P_s(a).
\]
\end{theorem}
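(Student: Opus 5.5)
The plan is a direct computation in two equalities, using only Assumption~\ref{assump:mixture_soft_prefix} and the definition of mutual information.

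\textbf{First equality.} We condition on $S$ and apply the tower property:
\[
\mathbb{E}_{S\sim w,\,A\sim P_S}[\kappa(S,A)]
=\sum_{s\in\mathcal{S}} w(s)\sum_{a\in\mathcal{A}} P_s(a)\bigl[\log P_s(a)-\log \bar P_w(a)\bigr].
\]
The inner sum is exactly $D_{\mathrm{KL}}(P_s\,\|\,\bar P_w)$. Here Assumption~\ref{assump:mixture_soft_prefix} is what justifies writing the continuous-prefix probability $p_\theta(a\mid e_w)$ as $\bar P_w(a)$.

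\textbf{Second equality.} We identify the pieces of the joint law $\Pr(S=s,A=a)=w(s)P_s(a)$:
\begin{itemize}
\item the marginal of $A$ is $\sum_s w(s)P_s(a)=\bar P_w(a)$;
\item the conditional of $A$ given $S=s$ is $P_s$.
\end{itemize}
By the standard identity $I(S;A)=\sum_s \Pr(S=s)\,D_{\mathrm{KL}}(P_{A\mid S=s}\,\|\,P_A)$, the weighted KL sum equals $I(S;A)$. Equivalently, we can expand
\[
I(S;A)=\sum_{s,a} w(s)P_s(a)\log\frac{w(s)P_s(a)}{w(s)\bar P_w(a)}
\]
and cancel $w(s)$, which gives the same double sum as above.

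\textbf{Support issues.} The main thing to check is the convention for zero probabilities.
\begin{itemize}
\item Terms with $w(s)=0$ or $P_s(a)=0$ contribute zero, using $0\log 0=0$, and are never sampled.
\item Whenever $w(s)P_s(a)>0$, we have $\bar P_w(a)\ge w(s)P_s(a)>0$.
\end{itemize}
So $P_s\ll \bar P_w$ for every $s$ in the support of $w$. Every logarithm that appears is therefore finite, all sums are finite because $\mathcal{S}$ and $\mathcal{A}$ are finite, and the interchange of sums is trivial.

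This support check is the only step that needs care. The rest is bookkeeping.
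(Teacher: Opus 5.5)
Your proposal is correct and follows essentially the same route as the paper's proof: condition on $S$ to obtain the weighted KL sum, identify the marginal of $A$ as $\bar P_w$, and cancel $w(s)$ in the mutual-information expansion. Your support check (that $w(s)P_s(a)>0$ forces $\bar P_w(a)>0$) is a small addition the paper leaves implicit; the convention $0\cdot\infty=0$ handles states outside $\mathrm{supp}(w)$.
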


The proof is deferred to \Cref{app:answer_relevant_uncertainty}.
\Cref{thm:answer_relevant_uncertainty_main} shows that \copt does not penalize latent-state uncertainty by itself. Instead, it measures whether that uncertainty changes the next answer-token distribution. For example, the continuous prefix may represent a mixture over several possible states,
$S\in\{s_1,s_2,s_3\}$,
which can have high entropy. However, if all three states induce the same next answer token or the same next-token distribution, then the emitted token $A$ carries no information about which state was selected. In that case,
$I(S;A)=0$,
and the expected reverse-KL contribution is zero. Thus, high uncertainty over latent states is harmless when all plausible states agree on the next answer.

Applying this argument token by token, if the mixture-prefix assumption holds at each answer position $t$, then the normalized draft score satisfies
\[
\mathbb{E}[\kappa_{\mathrm a}]
=
\frac{1}{T_a}
\sum_{t=1}^{T_a}
I(S_t;A_t),
\]

which is conditioned on the preceding context at each position. Therefore, $\kappa_{\mathrm a}$ estimates the average amount of answer-relevant uncertainty in the draft answer.

\section{Experiments}
\label{sec:experiments}

\subsection{Experimental Settings}

\paragraph{Models.} 
We evaluate \copt on pure Transformer-based Qwen3 models~\citep{yang2025qwen3} and hybrid Gated-DeltaNet Qwen3.5 models~\citep{qwen3.5} at 2B, 8B, and 35B scales. This selection allows us to validate the effectiveness of \copt across model families, scales, and architectures, including pure Transformer, hybrid, dense, and sparse mixture-of-experts models.

\paragraph{Domains and Benchmarks.} We evaluate \copt on 10 benchmarks spanning four domains: math and STEM reasoning~(GSM8K~\citep{cobbe2021training}, Math500~\citep{hendrycks2021measuring}, AIME 2024~\citep{hf_aime24_dataset}, AIME 2025~\citep{hf_aime25_dataset}, GPQA Diamond~\citep{rein2024gpqa}); coding reasoning~(HumanEval~\citep{chen2021evaluating}, MBPP~\citep{austin2021program}, LeetCode-Contest~\citep{guo2024deepseekcoder}); single-turn and multi-turn agentic reasoning~(BFCL v4~\citep{patil2025bfcl}, ZebraArena~\citep{zhao2026zebraarena}). More details are provided in Appendix~\ref{supp_benchmark}.

\begin{table}[t]
\centering
\caption{Comparison on mathematics, coding, and STEM reasoning benchmarks with the Qwen3-8B model. \textcolor{oliveGreen}{Green blocks} indicate increasing reasoning effort of \copt to achieve higher accuracy.}
\vspace{0.8em}
\begingroup
\footnotesize
\setlength{\tabcolsep}{1.6pt}
\begin{tabular}{lcccccccc}
\toprule[1.2pt]
\multirow[c]{2}{*}{\raisebox{-0.5ex}{\textbf{Mathematics}}}
& \multicolumn{2}{c}{\textbf{GSM8K}}
& \multicolumn{2}{c}{\textbf{Math500}}
& \multicolumn{2}{c}{\textbf{AIME24}}
& \multicolumn{2}{c}{\textbf{AIME25}} \\
\cmidrule(lr){2-3}
\cmidrule(lr){4-5}
\cmidrule(lr){6-7}
\cmidrule(lr){8-9}
& Acc.~(\%) & \# Tokens
& Acc.~(\%) & \# Tokens
& Acc.~(\%) & \# Tokens
& Acc.~(\%) & \# Tokens \\
\midrule[0.8pt]
CoT
& 95.75 & 2138
& 96.00 & 4985
& 75.83 & 12077
& 67.50 & 12924 \\
CoT~(Greedy)
& 95.83 & 2240
& 96.40 & 5311
& 70.00 & 11680
& 60.00 & 13292 \\
\midrule[0.5pt]
\multirow{2}{*}{\textbf{CopT~(Ours)}}
& \cellcolor{coptMainBg}\accup{96.36}{0.61}
& \cellcolor{coptMainBg}\tokdown{1813}{-15.2}
& \cellcolor{coptMainBg}\accup{97.60}{1.60}
& \cellcolor{coptMainBg}\tokdown{4851}{-2.7}
& \multirow{2}{*}{\accup{79.17}{3.34}}
& \multirow{2}{*}{\tokdown{11525}{-4.6}}
& \multirow{2}{*}{\accup{70.42}{2.92}}
& \multirow{2}{*}{\tokdown{12801}{-1.0}} \\
& \accup{95.98}{0.23}
& \tokdown{961}{-55.1}
& \accup{96.20}{0.20}
& \tokdown{3609}{-27.6}
&  & 
&  &  \\
\midrule[1.0pt]
\multirow[c]{2}{*}[-0.8ex]{\shortstack[l]{\textbf{Coding}\\[-1pt]\textbf{\& STEM}}}
& \multicolumn{2}{c}{\textbf{HumanEval}}
& \multicolumn{2}{c}{\textbf{LeetCode-Contest}}
& \multicolumn{2}{c}{\textbf{MBPP}}
& \multicolumn{2}{c}{\textbf{GPQA Diamond}} \\
\cmidrule(lr){2-3}
\cmidrule(lr){4-5}
\cmidrule(lr){6-7}
\cmidrule(lr){8-9}
& Acc.~(\%) & \# Tokens
& Acc.~(\%) & \# Tokens
& Acc.~(\%) & \# Tokens
& Acc.~(\%) & \# Tokens \\
\midrule[0.8pt]
CoT
& 92.68 & 2368
& 59.44 & 7306
& 94.16 & 2033
& 59.60 & 8123 \\
CoT~(Greedy)
& 93.90 & 2627
& 57.22 & 6975
& 91.44 & 2724
& 56.57 & 7909 \\
\midrule[0.5pt]
\multirow{2}{*}{\textbf{CopT~(Ours)}}
& \cellcolor{coptMainBg}\accup{96.34}{3.66}
& \cellcolor{coptMainBg}\tokdown{1842}{-22.2}
& \cellcolor{coptMainBg}\accup{66.11}{6.67}
& \cellcolor{coptMainBg}\tokup{7607}{4.1}
& \multirow{2}{*}{\accup{94.55}{1.39}}
& \multirow{2}{*}{\tokdown{1997}{-1.8}}
& \multirow{2}{*}{\accup{61.62}{2.02}}
& \multirow{2}{*}{\tokdown{6851}{-15.7}} \\
& \accup{94.51}{1.83}
& \tokdown{1023}{-56.8}
& \accup{61.11}{1.67}
& \tokdown{6993}{-4.3}
&  & 
&  &  \\
\bottomrule[1.2pt]
\end{tabular}
\endgroup
\label{tab:main_math_coding}
\end{table}

\subsection{Experimental Results on Mathematics and Coding Reasoning}

Tab.~\ref{tab:main_math_coding} reports accuracy and generation length on mathematics, coding, and STEM reasoning benchmarks with the Qwen3-8B model. Compared with standard \textsc{CoT} and greedy \textsc{CoT}, \copt improves accuracy while effectively reducing generation length across most settings. When applicable, we report two sets of \copt results: one targeting accuracy comparable to or higher than \textsc{CoT}, and another, shown in green, that further improves peak accuracy by increasing reasoning effort.

On mathematics benchmarks, the token-saving setting of \copt improves GSM8K accuracy by $+0.23\%$ while reducing generated tokens by $55.1\%$, and improves
Math500 accuracy by $+0.20\%$ while reducing generated tokens by $27.6\%$. These results show substantial efficiency gains on problems that do not require extended thinking. With increasing reasoning effort, \copt further improves GSM8K and Math500 accuracy by $+0.61\%$ and $+1.60\%$, respectively. On more challenging AIME benchmarks, \copt obtains larger accuracy gains: $+3.34\%$ on AIME24 and $+2.92\%$ on AIME25. The same trend holds on coding and STEM tasks. At matched accuracy levels, \copt improves HumanEval accuracy by $+1.83\%$ while reducing tokens by $56.8\%$. With increasing reasoning effort, \copt achieves larger accuracy gain of $+3.66\%$, $+6.67\%$, $+1.39\%$, $+2.02\%$ on HumanEval, LeetCode-Contest, MBPP, and GPQA Diamond, respectively.

These results suggest that \copt improves peak accuracy by selectively invoking on-policy thinking when the draft answer appears insufficiently reliable. This is especially beneficial on harder benchmarks such as AIME24, AIME25, LeetCode-Contest, and GPQA Diamond, where draft answers are more likely to require correction. In addition, \copt reduces unnecessary thinking on easier examples by allowing sufficiently reliable draft answers to be accepted earlier. This leads to considerable token savings on benchmarks such as GSM8K and HumanEval. 

\begin{table}[t]
\centering
\caption{Comparison with training-free methods that use continuous embeddings for generation. Token counts are measured by generation steps, regardless of whether the steps are explicit or latent.}
\vspace{0.3em}
\begingroup
\footnotesize
\setlength{\tabcolsep}{3.5pt}
\begin{tabular}{lccccccccc}
\toprule[1.2pt]
\multirow[c]{2}{*}{\raisebox{-0.5ex}{\textbf{Method}}}
& \multicolumn{2}{c}{\textbf{GSM8K}}
& \multicolumn{2}{c}{\textbf{AIME25}}
& \multicolumn{2}{c}{\textbf{HumanEval}}
& \multicolumn{2}{c}{\textbf{GPQA Diamond}}
& \multirow[c]{2}{*}[-0.7ex]{\textbf{\shortstack{Fully Explicit\\Readability}}} \\
\cmidrule(lr){2-3}
\cmidrule(lr){4-5}
\cmidrule(lr){6-7}
\cmidrule(lr){8-9}
& Acc.~(\%) & \# Tokens
& Acc.~(\%) & \# Tokens
& Acc.~(\%) & \# Tokens
& Acc.~(\%) & \# Tokens
&  \\
\midrule[0.8pt]
Soft-Thinking
& 95.38 & 2073
& 68.33 & 13665
& 92.07 & 2408
& 59.60 & 8153
& \textcolor{brickRed}{$\times$} \\
SwiReasoning
& 96.06 & 2218
& 70.00 & 13911
& 95.73 & 2894
& 61.11 & 8359
& \textcolor{brickRed}{$\times$} \\
\midrule[0.5pt]
\textbf{CopT~(Ours)}
& \textbf{96.36}
& \textbf{1813}
& \textbf{70.42}
& \textbf{12801}
& \textbf{96.34}
& \textbf{1842}
& \textbf{61.62}
& \textbf{6851}
& \textcolor{oliveGreen}{$\boldsymbol{\checkmark}$} \\
\bottomrule[1.2pt]
\end{tabular}
\endgroup
\label{tab:main_baselines}
\end{table}

\subsection{Comparison with Training-Free Continuous-Generation Methods}

Tab.~\ref{tab:main_baselines} compares \copt with two training-free methods that directly use continuous embeddings for generation, Soft-Thinking~\citep{zhang2025softthinking} and SwiReasoning~\citep{shi2025swireasoning}, with the Qwen3-8B model. \copt differs from them by keeping generation fully explicit and recasting continuous embeddings as inference-time verifiers.  Overall, \copt achieves the best accuracy with the fewest generated tokens. Compared with SwiReasoning, \copt improves accuracy by $+0.30\%$, $+0.42\%$, $+0.61\%$, and $+0.51\%$ on four mathematics, coding, and STEM benchmarks, while using $18.3\%$, $8.0\%$, $36.4\%$, and $18.0\%$ fewer tokens. In addition, the reasoning processes of these latent \textsc{CoT} methods are not fully represented in natural language. In contrast, \copt allows users to inspect the complete reasoning process while still benefiting from uncertainty information preserved in continuous embeddings. These results indicate that continuous embeddings do not need to be used as a generation strategy to improve reasoning. By recasting them as contrastive verifiers, \copt better balances accuracy, efficiency, and readability.

\subsection{Controllable Reasoning Effort and Latency Reduction}

\begin{figure}[t]
    \vspace{-0.5em}
    \centering
    \includegraphics[width=0.9\linewidth]{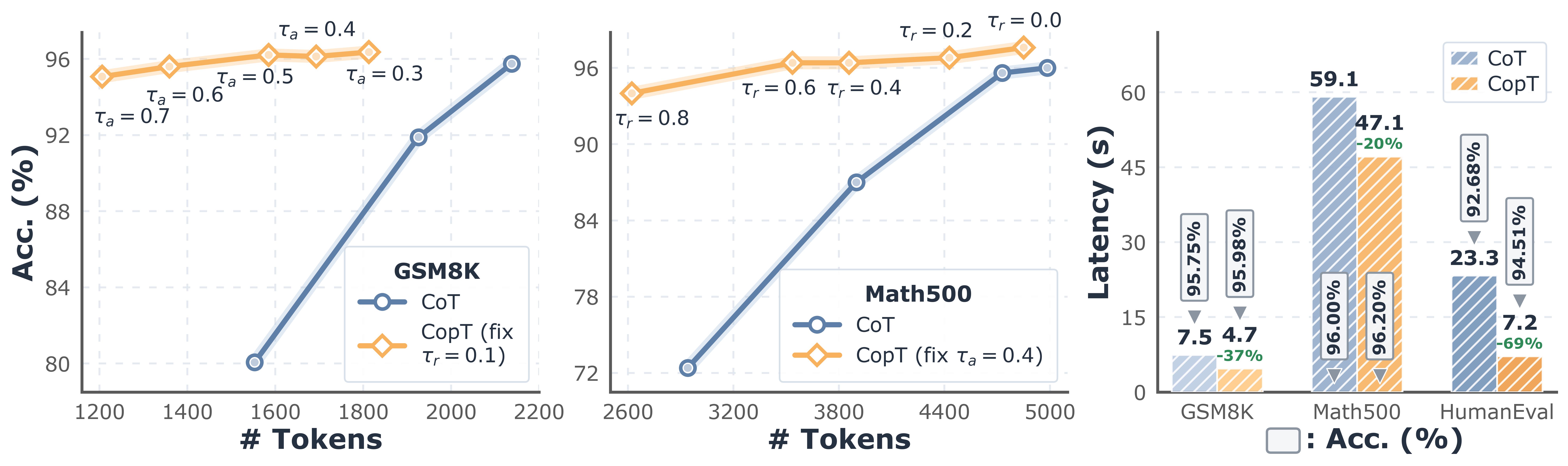}
    \vspace{-0.5em}
    \caption{Left and center: Controllable reasoning effort by $\tau_a$ and $\tau_r$. Right: \copt reduces average per-sample latency at comparable or higher accuracy, measured on a single H200 GPU.}
    \label{fig:efficiency}
    \vspace{-0.5em}
\end{figure}

Fig.~\ref{fig:efficiency} studies how \copt controls reasoning effort and how token savings translate into real latency reduction. The left and center panels show the accuracy-token trade-off obtained by sweeping $\tau_{\text a}$ and $\tau_{\text r}$. These curves show that \copt can reduce generation length when less thinking is needed, while allocating more on-policy thinking to obtain higher accuracy when the thresholds trigger stronger verification and correction. The right panel reports average latency per sample at comparable or higher accuracy. The reductions are consistent with the token-efficiency gains: \copt reduces latency by $37\%$ on GSM8K, $20\%$ on Math500, and $69\%$ on HumanEval. These results show that \copt reduces actual generation latency while preserving reasoning accuracy.

\begin{table}[t]
\centering
\caption{Comparison on agentic reasoning benchmarks BFCL v4~(single-turn) and ZebraArena~(multi-turn). \textcolor{oliveGreen}{Green blocks} indicate increasing reasoning effort of CopT to achieve higher accuracy.}
\vspace{0.3em}
\begingroup
\footnotesize
\setlength{\tabcolsep}{1.3pt}
\begin{tabular}{lcccccccccc}
\toprule[1.2pt]
\multirow[c]{4}{*}[-1.6ex]{\textbf{Method}}
& \multicolumn{2}{c}{\textbf{Qwen3.5-2B}}
& \multicolumn{8}{c}{\textbf{Qwen3.5-35B-A3B}} \\
\cmidrule(lr){2-3}
\cmidrule(lr){4-11}

& \multicolumn{2}{c}{\textbf{BFCL v4}}
& \multicolumn{2}{c}{\textbf{BFCL v4}}
& \multicolumn{6}{c}{\textbf{ZebraArena (multi-turn)}} \\
\cmidrule(lr){6-11}

& \multicolumn{2}{c}{\textbf{(non-live and live)}}
& \multicolumn{2}{c}{\textbf{(non-live and live)}}
& \multicolumn{2}{c}{\textbf{Small}}
& \multicolumn{2}{c}{\textbf{Medium}}
& \multicolumn{2}{c}{\textbf{Large}} \\
\cmidrule(lr){2-3}
\cmidrule(lr){4-5}
\cmidrule(lr){6-7}
\cmidrule(lr){8-9}
\cmidrule(lr){10-11}

& Acc.~(\%) & \# Tokens
& Acc.~(\%) & \# Tokens
& Acc.~(\%) & \# Tokens
& Acc.~(\%) & \# Tokens
& Acc.~(\%) & \# Tokens \\
\midrule[0.8pt]
CoT
& 77.53 & 234
& 85.77 & 235
& 93.71 & 3357
& 75.00 & 7217
& 59.21 & 8070 \\
\midrule[0.5pt]
\multirow{2}{*}{\textbf{CopT~(Ours)}}
& \cellcolor{coptMainBg}\accup{78.37}{0.84}
& \cellcolor{coptMainBg}\tokdown{164}{-29.9}
& \cellcolor{coptMainBg}\accup{86.45}{0.68}
& \cellcolor{coptMainBg}\tokdown{168}{-28.5}
& \multirow[c]{2}{*}[-0.4ex]{\accupbr{96.69}{2.98}}
& \multirow[c]{2}{*}[-0.4ex]{\tokupbr{3486}{3.8}}
& \multirow[c]{2}{*}[-0.4ex]{\accupbr{88.14}{13.14}}
& \multirow[c]{2}{*}[-0.4ex]{\tokdownbr{5457}{-24.4}}
& \multirow[c]{2}{*}[-0.4ex]{\accupbr{82.24}{23.03}}
& \multirow[c]{2}{*}[-0.4ex]{\tokdownbr{6486}{-19.6}} \\
& \accup{78.01}{0.48}
& \tokdown{139}{-40.6}
& \accup{86.17}{0.40}
& \tokdown{130}{-44.7}
& 
& 
& 
& 
& 
&  \\
\bottomrule[1.2pt]
\end{tabular}
\endgroup
\label{tab:main_agentic}
\end{table}

\subsection{Experimental Results on Agentic Reasoning}

Tab.~\ref{tab:main_agentic} evaluates \copt on agentic reasoning benchmarks, including the non-live and live splits of BFCL v4 and multi-turn ZebraArena with one missing clue. We use Qwen3.5 models for agentic evaluation because they are designed for stronger agentic capabilities than reasoning-only models~\citep{qwen3.5}. On BFCL v4, \copt consistently improves accuracy while reducing generation length across scales. In particular, \copt maintains comparable or higher accuracy with $40.6\%$ fewer tokens on Qwen3.5-2B and $44.7\%$ fewer tokens on Qwen3.5-35B-A3B. The gains are stronger on the multi-turn ZebraArena benchmark. \copt improves accuracy by $+2.98\%$, $+13.14\%$, and $+23.03\%$ on the small, medium, and large splits, respectively. For generation length, \copt uses $3.8\%$ more tokens on the small split, but reduces tokens by $24.4\%$ on the medium split and $19.6\%$ on the large split. These results suggest that \copt is especially useful in longer agentic interactions, where improved accuracy-token trade-offs can accumulate across multiple rounds of interaction. For simplicity, we use the same reasoning effort of $\tau_r=0, \tau_a=0.3$ for all single-turn BFCL v4 splits and $\tau_r=0, \tau_a=1.5$ for all multi-turn ZebraArena splits. Better trade-offs could be achieved by setting separate reasoning effort for different task difficulties.

\subsection{Ablation Studies on Design Choices}

\begin{figure}[t]
    \vspace{-0.1em}
    \centering
    \includegraphics[width=1.0\linewidth]{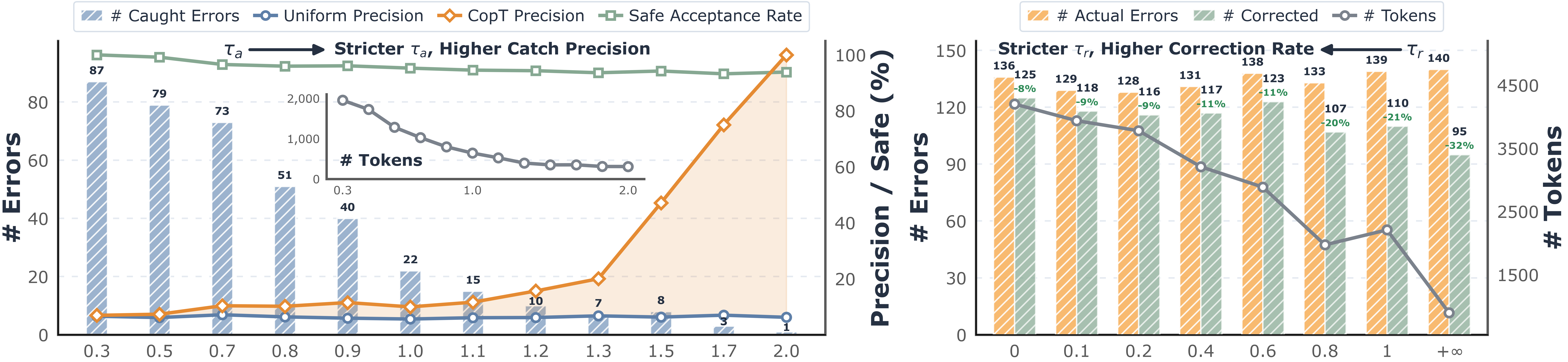}
    \vspace{-1em}
    \caption{Left: The estimator $\kappa_a$ identifies erroneous drafts more precisely than uniform selection. Right: The threshold $\tau_r$ trades off correction rate and token usage by controlling draft visibility.}
    \label{fig:main_ablation}
    \vspace{-0.5em}
\end{figure}

\paragraph{Draft answer reliability estimation.}
We first examine whether the draft-answer reliability estimator $\kappa_{\text{a}}$ can effectively identify unreliable draft answers on GSM8K. Fig.~\ref{fig:main_ablation}~(left) compares \copt with a uniform selection under different thresholds $\tau_{\text{a}}$. For each threshold, we report the number of caught errors, the precision defined as the fraction of presumed errors that are truly erroneous, and the safe acceptance rate defined as the fraction of directly accepted answers that are correct. As $\tau_{\text{a}}$ becomes stricter, \copt selects fewer draft answers as unreliable, but the selected subset becomes increasingly concentrated on truly erroneous drafts. In contrast, uniform selection yields a much lower and nearly flat precision, indicating that uniform allocation of additional thinking fails to distinguish unreliable answers from reliable ones. These results confirm that $\kappa_{\text{a}}$ provides a meaningful reliability estimate.

\paragraph{Visibility control for draft answers.}
We next study the effect of the visibility estimator $\kappa_{\text{r}}$ during on-policy thinking on Math500. Fig.~\ref{fig:main_ablation}~(right) varies the threshold $\tau_{\text{r}}$ and reports the number of actual draft errors and the number of errors that are successfully corrected. Smaller $\tau_{\text{r}}$ corresponds to stricter visibility control, exposing the draft answer less often. As $\tau_{\text{r}}$ becomes stricter, \copt increases reasoning effort with less reliance on the potentially wrong draft answer, and corrects a larger fraction of erroneous draft answers. These results show that the estimator $\kappa_{\text{r}}$ plays a direct role in reflection and correction. If the draft answer is always exposed, unreliable content may continue to influence the on-policy thinking process and prevent the model from correcting its initial mistake. By contrast, dynamic visibility control allows \copt to hide the draft answer when the current thinking chunk appears unstable, while still allowing access to the draft when it provides useful partial information.

\section{Conclusion}

This paper introduces \copt, a training-free LLM reasoning pipeline that reverses the usual order of thinking and answering. \copt first elicits a draft answer and then invokes on-policy thinking conditioned on the draft when it appears unreliable. \copt recasts continuous embeddings, previously used for generation in latent \textsc{CoT} methods, as inference-time verifiers by contrasting the model's support for the same generated tokens under discrete-token and continuous-embedding inputs. Experiments on math, coding, and agentic reasoning tasks show that \copt consistently improves both accuracy and token efficiency, suggesting that on-policy thinking with contrastive verification from continuous embeddings provides a practical path toward more cost-effective LLM reasoning at inference time.

\newpage

\bibliographystyle{plainnat}
\bibliography{main_arxiv}

\newpage

\appendix
\section{Impact Statement}
\label{sec:impact}

This paper studies an inference-time method for reasoning LLMs. By improving accuracy and token efficiency, \copt may make reasoning LLMs more accessible and cost-effective for beneficial applications, such as scientific problem solving, programming assistance, and tool-using agents. At the same time, stronger and cheaper reasoning may also increase the usefulness of LLMs in harmful applications if deployed without proper safeguards. These risks are not specific to \copt, but follow from improving the capability of reasoning LLMs. \copt does not introduce a new class of societal risks beyond those associated with the underlying models.
\section{Usage of Large Language Models}
\label{sec:llm_usage}
We used LLMs and coding agents to assist with language polishing, improving readability, and debugging code. LLMs were not misused intentionally in any part of this work. All technical ideas and experimental results are the product of human efforts.
\section{Limitations}
\label{sec:limitations}

While \copt reformulates LLM reasoning in a training-free manner, its current design has a few limitations. First, the proposed KL estimators are evaluated on the realized trajectory available at inference time, rather than averaged over multiple continuations sampled from the model distribution. This matches our per-instance reasoning control setting, but may lead to higher variance than multi-sample estimates. We mitigate this by averaging over tokens, while future work may study more adaptive or lower-variance estimators. Second, \copt requires access to next-token probabilities. This is natural for open-weight models and inference systems that expose logits, but may be less convenient for closed APIs that only return text outputs. Future work may explore API-compatible variants that approximate the contrastive reliability using limited observable information.
\section{Supplementary Experiments}

\subsection{Ablation on Answer-Content Granularity}

\begin{table}[ht]
\centering
\caption{Ablation on the granularity of $\kappa_{\text a}$ estimation. The default setting computes $\kappa_{\text a}$ over the whole draft answer, while the answer-content setting computes it only on the extracted answer span.}
\vspace{0.8em}
\begingroup
\footnotesize
\setlength{\tabcolsep}{6.0pt}
\begin{tabular}{lcccc}
\toprule[1.2pt]
\multirow[c]{2}{*}{\raisebox{-0.5ex}{\textbf{Granularity}}}
& \multicolumn{2}{c}{\textbf{GSM8K}}
& \multicolumn{2}{c}{\textbf{Math500}} \\
\cmidrule(lr){2-3}
\cmidrule(lr){4-5}
& Acc.~(\%) & \# Tokens
& Acc.~(\%) & \# Tokens \\
\midrule[0.8pt]
\rowcolor{defaultBg}
Whole draft answer (default)
& 95.98 & 961
& 96.20 & 3609 \\
Answer content only
& \accup{\textbf{96.36}}{0.38}
& \tokdown{\textbf{885}}{-7.9}
& \accup{\textbf{96.40}}{0.20}
& \tokdown{\textbf{3214}}{-10.9} \\
\bottomrule[1.2pt]
\end{tabular}
\endgroup
\label{tab:granularity}
\end{table}

Tab.~\ref{tab:granularity} studies the effect of the granularity used for $\kappa_a$ estimation. In the default setting, we compute the KL-based reliability estimator over the whole draft answer. We find that a finer-grained variant can further improve performance when the final answer span is easy to identify. For example, in mathematics tasks, the final answer is usually enclosed by \texttt{\textbackslash boxed\{\}}, which allows us to extract the answer tokens and compute $\kappa_a$ only on this compact region. This setting improves GSM8K accuracy from 95.98\% to 96.36\% while reducing the average generation length from 961 to 885 tokens. On Math500, it improves accuracy from 96.20\% to 96.40\% and reduces the average generation length from 3609 to 3214 tokens.

The results suggest that more precise reliability estimation can be useful when a task provides a clear and compact answer region. However, such extraction is task-dependent and is not always available for reasoning tasks beyond mathematics. To keep the method general across domains, all results in this paper use the default setting, where $\kappa_a$ is computed over the whole draft.

\subsection{Ablation on the Maximum Draft-Answer Length}

\begin{table}[ht]
\centering
\caption{Ablation on the maximum draft-answer length.}
\vspace{0.8em}
\begingroup
\footnotesize
\setlength{\tabcolsep}{6.0pt}
\begin{tabular}{lcccc}
\toprule[1.2pt]
\multirow[c]{2}{*}{\raisebox{-0.5ex}{\textbf{Maximum Draft Length}}}
& \multicolumn{2}{c}{\textbf{GSM8K}}
& \multicolumn{2}{c}{\textbf{Math500}} \\
\cmidrule(lr){2-3}
\cmidrule(lr){4-5}
& Acc.~(\%) & \# Tokens
& Acc.~(\%) & \# Tokens \\
\midrule[0.8pt]
256
& \accdown{95.67}{0.31}
& \tokup{1075}{11.9}
& \accup{96.60}{0.40}
& \tokdown{3405}{-5.7} \\
512
& \accdown{95.37}{0.61}
& \tokup{1021}{6.2}
& \accup{\textbf{97.00}}{0.80}
& \tokup{3660}{1.4} \\
\rowcolor{defaultBg}
1,024 (default)
& \textbf{95.98} & \textbf{961}
& 96.20 & 3609 \\
2,048
& \accdown{95.52}{0.46}
& \tokup{965}{0.4}
& \accdown{94.40}{1.80}
& \tokdown{\textbf{3265}}{-9.5} \\
\bottomrule[1.2pt]
\end{tabular}
\endgroup
\label{tab:max_draft_length}
\end{table}

Tab.~\ref{tab:max_draft_length} studies the effect of the maximum generation length for draft answers. We set the maximum draft length to $1{,}024$ by default. This cap serves as a practical safeguard for the draft-first setting. We observe that for a small number of challenging tasks, reasoning LLMs may not be well calibrated to directly produce a concise answer before extended thinking. In such cases, the draft-answer stage can occasionally continue with repetitive or uninformative text. Bounding the draft length prevents these cases from consuming excessive tokens before the reliability estimator and subsequent on-policy thinking are applied.

Overall, the maximum draft-answer length does not behave as a highly sensitive hyperparameter. On GSM8K, the default length of $1{,}024$ achieves the best accuracy while using the fewest tokens among all settings. On Math500, shorter limits such as $256$ and $512$ can improve accuracy, suggesting that task-specific draft caps may provide additional gains. However, increasing the cap to \(2{,}048\) does not improve accuracy on either benchmark. These results suggest that the draft-length cap mainly serves to prevent unusually long or repetitive drafts, rather than acting as a sensitive hyperparameter. We therefore use $1{,}024$ as a simple default for all experiments, which provides stable performance without task-specific tuning.

The only exception is the multi-turn ZebraArena benchmark, where we use a smaller maximum draft length of $512$. Since ZebraArena requires multiple rounds of interaction, the effective draft budget can accumulate across turns. A shorter per-turn cap helps control total context growth while still allowing the model to produce a concise draft answer at each turn. 

\subsection{Detailed Results on the LeetCode-Contest Benchmark}

\begin{table}[ht]
\centering
\caption{Per-split accuracy and generation length comparison on the LeetCode-Contest benchmark.}
\vspace{0.8em}
\begingroup
\footnotesize
\setlength{\tabcolsep}{5.0pt}
\begin{tabular}{lccccc}
\toprule[1.2pt]
\textbf{Method}
& \textbf{Easy}
& \textbf{Medium}
& \textbf{Hard}
& \textbf{Overall}
& \textbf{\# Tokens} \\
\midrule[0.8pt]
CoT
& 57.78
& 68.13
& 43.18
& 59.44
& 7306 \\
CoT~(Greedy)
& 64.44
& 58.24
& 47.73
& 57.22
& 6975 \\
\midrule[0.5pt]
\multirow{2}{*}{\textbf{CopT~(Ours)}}
& \cellcolor{coptMainBg}\accup{64.44}{6.66}
& \cellcolor{coptMainBg}\accup{72.53}{4.40}
& \cellcolor{coptMainBg}\accup{54.55}{11.37}
& \cellcolor{coptMainBg}\accup{66.11}{6.67}
& \cellcolor{coptMainBg}\tokup{7607}{4.1} \\
& \accdown{51.11}{6.67}
& \accup{69.23}{1.10}
& \accup{54.55}{11.37}
& \accup{61.11}{1.67}
& \tokdown{6993}{-4.3} \\
\bottomrule[1.2pt]
\end{tabular}
\endgroup
\label{tab:leetcode_detailed}
\end{table}

Tab.~\ref{tab:leetcode_detailed} reports per-split results on the LeetCode-Contest benchmark. \copt improves the overall accuracy from $59.44\%$ to $66.11\%$ with a similar token budget. The largest gain appears on the hard split, where \copt improves accuracy from $43.18\%$ to $54.55\%$, yielding an absolute gain of $+11.37\%$. This suggests that on-policy thinking is especially useful on harder problems, where the initial draft answer is more likely to require further reflection and correction.

The two \copt rows show different accuracy-efficiency trade-offs. The set with increasing reasoning effort achieves the best overall accuracy, improving all three splits over standard \textsc{CoT}, with only a small increase in generation length. The other set reduces token usage by $4.3\%$ while still improving overall accuracy by $+1.67\%$. Although the other set reduces accuracy on the easy split, it preserves the gain on the hard split and still improves the medium split. These results show that \copt can trade generation length for accuracy in a controlled way.

\subsection{Detailed Results on BFCL v4 Benchmark}

\begin{table*}[ht]
\centering
\caption{Per-split accuracy comparison on the BFCL v4 benchmark.}
\label{tab:bfcl_split}
\vspace{0.8em}
\begingroup
\footnotesize
\setlength{\tabcolsep}{1pt}

\begin{tabular}{llcccccc}
\toprule[1.2pt]
\textbf{Model} & \textbf{Method}
& \textbf{live\_multiple}
& \textbf{live\_parallel}
& \textbf{live\_parallel\_multiple}
& \textbf{live\_simple}
& \textbf{multiple}
& \textbf{parallel} \\
\midrule[0.8pt]
\multirow[c]{3}{*}{Qwen3.5-2B}
& CoT
& 73.41
& 62.50
& \textbf{54.17}
& \textbf{72.87}
& 90.50
& 81.50 \\
& \multirow[c]{2}{*}{\textbf{CopT~(Ours)}}
& \textbf{74.93}
& \textbf{81.25}
& \textbf{54.17}
& 69.77
& \textbf{91.50}
& \textbf{82.00} \\
&
& {\scriptsize\textcolor{oliveGreen}{(+1.52)}}
& {\scriptsize\textcolor{oliveGreen}{(+18.75)}}
& {\scriptsize\textcolor{oliveGreen}{(+0.00)}}
& {\scriptsize\textcolor{blue!70!black}{(-3.10)}}
& {\scriptsize\textcolor{oliveGreen}{(+1.00)}}
& {\scriptsize\textcolor{oliveGreen}{(+0.50)}} \\
\midrule[0.5pt]
\multirow[c]{3}{*}{Qwen3.5-35B-A3B}
& CoT
& 81.20
& \textbf{87.50}
& \textbf{79.17}
& 85.27
& 93.50
& \textbf{91.50} \\
& \multirow[c]{2}{*}{\textbf{CopT~(Ours)}}
& \textbf{81.58}
& \textbf{87.50}
& \textbf{79.17}
& \textbf{85.66}
& \textbf{95.00}
& \textbf{91.50} \\
&
& {\scriptsize\textcolor{oliveGreen}{(+0.38)}}
& {\scriptsize\textcolor{oliveGreen}{(+0.00)}}
& {\scriptsize\textcolor{oliveGreen}{(+0.00)}}
& {\scriptsize\textcolor{oliveGreen}{(+0.39)}}
& {\scriptsize\textcolor{oliveGreen}{(+1.50)}}
& {\scriptsize\textcolor{oliveGreen}{(+0.00)}} \\
\bottomrule[1.2pt]
\end{tabular}

\vspace{0.9em}

\begin{tabular}{llcccccc}
\toprule[1.2pt]
\textbf{Model} & \textbf{Method}
& \textbf{parallel\_multiple}
& \textbf{simple\_java}
& \textbf{simple\_javascript}
& \textbf{simple\_python}
& \textbf{Overall}
& \textbf{\# Tokens} \\
\midrule[0.8pt]
\multirow[c]{3}{*}{Qwen3.5-2B}
& CoT
& 79.00
& 69.00
& \textbf{60.00}
& 88.50
& 77.53
& 234 \\
& \multirow[c]{2}{*}{\textbf{CopT~(Ours)}}
& \textbf{79.50}
& \textbf{72.00}
& \textbf{60.00}
& \textbf{89.25}
& \textbf{78.37}
& \textbf{164} \\
&
& {\scriptsize\textcolor{oliveGreen}{(+0.50)}}
& {\scriptsize\textcolor{oliveGreen}{(+3.00)}}
& {\scriptsize\textcolor{oliveGreen}{(+0.00)}}
& {\scriptsize\textcolor{oliveGreen}{(+0.75)}}
& {\scriptsize\textcolor{oliveGreen}{(+0.84)}}
& {\scriptsize\textcolor{blue!70!black}{(-29.9\%)}} \\
\midrule[0.5pt]
\multirow[c]{3}{*}{Qwen3.5-35B-A3B}
& CoT
& \textbf{88.00}
& 78.00
& \textbf{78.00}
& 93.50
& 85.77
& 235 \\
& \multirow[c]{2}{*}{\textbf{CopT~(Ours)}}
& \textbf{88.00}
& \textbf{82.00}
& 72.00
& \textbf{95.50}
& \textbf{86.45}
& \textbf{168} \\
&
& {\scriptsize\textcolor{oliveGreen}{(+0.00)}}
& {\scriptsize\textcolor{oliveGreen}{(+4.00)}}
& {\scriptsize\textcolor{blue!70!black}{(-6.00)}}
& {\scriptsize\textcolor{oliveGreen}{(+2.00)}}
& {\scriptsize\textcolor{oliveGreen}{(+0.68)}}
& {\scriptsize\textcolor{blue!70!black}{(-28.5\%)}} \\
\bottomrule[1.2pt]
\end{tabular}

\endgroup
\label{tab:bfcl_detailed}
\end{table*}

Tab.~\ref{tab:bfcl_detailed} reports per-split results on the non-live and live splits of BFCL v4. The results show that \copt preserves or improves accuracy in most cases across different function-calling categories while reducing unnecessary reasoning tokens, suggesting that \copt provides a better accuracy-efficiency trade-off for structured agentic reasoning.
\section{Supplementary Details}
\label{sec:supplementary_details}

\subsection{Implementation Details}

All experiments are conducted on a single NVIDIA H200 GPU. We use the default generation settings of Qwen3~\citep{yang2025qwen3} and Qwen3.5~\citep{qwen3.5} models, including temperature of 0.6, top-p of 0.95, top-k of 20, and min-p of 0 for all experiments. The chunk size of \copt is set to $C = \max\!\left(1, \left\lfloor \frac{T_a}{4} \right\rfloor \right)$, which balances timely estimation with low additional cost. By default, the draft answer is invisible to the first chunk of on-policy thinking, since the earliest value of $\kappa_{\text{r}}$ is available only after the first chunk is completed. We set the maximum draft answer length to 512 for the multi-turn ZebraArena benchmark and 1,024 for all other benchmarks.

\subsection{Benchmark Details}
\label{supp_benchmark}

We evaluate \copt on 10 reasoning benchmarks, covering GSM8K~\citep{cobbe2021training}, Math500~\citep{hendrycks2021measuring}, AIME 2024~\citep{hf_aime24_dataset}, AIME 2025~\citep{hf_aime25_dataset}, GPQA Diamond~\citep{rein2024gpqa} for mathematics and STEM reasoning; HumanEval~\citep{chen2021evaluating}, LeetCode-Contest~\citep{guo2024deepseekcoder}, MBPP~\citep{austin2021program} for coding reasoning; and BFCL v4~\citep{patil2025bfcl}, ZebraArena~\citep{zhao2026zebraarena} for agentic reasoning.

\begin{itemize}
    \item \textbf{GSM8K}: We evaluate on the test set of 1,319 grade-school math problems. The benchmark tests whether a model can solve natural-language arithmetic questions that often require several reasoning steps. \hf: \url{https://huggingface.co/datasets/openai/gsm8k}.
    \item \textbf{Math500}: We use the curated set of 500 problems from the MATH benchmark. The problems span multiple high-school competition mathematics areas, including algebra, geometry, number theory, precalculus, and intermediate algebra. \hf: \url{https://huggingface.co/datasets/HuggingFaceH4/MATH-500}.
    \item \textbf{AIME 2024}: The benchmark contains 30 problems from the 2024 American Invitational Mathematics Examination, covering both AIME I and AIME II. Each problem requires a concise numeric answer and is designed to test competition-level mathematical reasoning. \hf: \url{https://huggingface.co/datasets/HuggingFaceH4/aime_2024}.
    \item \textbf{AIME 2025}: The benchmark contains 30 problems from the 2025 American Invitational Mathematics Examination, covering both AIME I and AIME II. Each problem requires a concise numeric answer and continues the focus on competition-style math reasoning with challenging questions that test symbolic and logical skills. \hf: \url{https://huggingface.co/datasets/yentinglin/aime_2025}.
    \item \textbf{GPQA Diamond}: We use the Diamond subset of GPQA, which consists of 198 expert-verified multiple-choice STEM questions. The questions mainly cover mathematics, physics, chemistry, biology, and computer science, and are intended to be difficult for non-experts. The problems are designed to evaluate expert-level factual knowledge and reasoning ability. \hf: \url{https://huggingface.co/datasets/hendrydong/gpqa_diamond_mc}.
    \item \textbf{HumanEval}: We use the 164 hand-written Python programming tasks from HumanEval. Each task provides a function signature and docstring, and correctness is measured by executing the generated function against unit tests.  \hf: \url{https://huggingface.co/datasets/openai/openai_humaneval}.
    \item \textbf{LeetCode-Contest}: We evaluate on 180 programming contest problems collected from LeetCode contests. The benchmark contains problems of different difficulty levels, and model outputs are judged by whether the generated solutions pass all the associated tests. \hf: \url{https://huggingface.co/datasets/TechxGenus/LeetCode-Contest}.
    \item \textbf{MBPP}: We use the sanitized test split, which contains 257 Python programming problems. Each example includes a natural-language task description, a reference solution, and unit tests used for execution-based scoring.  \hf: \url{https://huggingface.co/datasets/google-research-datasets/mbpp}.
    \item \textbf{BFCL v4}: The Berkeley Function Calling Leaderboard v4 evaluates the ability of LLMs to invoke functions and tools accurately in realistic agentic settings. We use all subtasks from the non-live and live splits, which contain 2,501 problems in total. \github: \url{https://github.com/shishirpatil/gorilla}.
    \item \textbf{ZebraArena}: A diagnostic simulation environment for evaluating multi-turn agentic reasoning in tool-augmented LLMs. It is built on Zebra logic puzzles under a missing-clues setting, where models must query the environment for hidden facts or relations and then solve a uniquely verifiable constraint satisfaction problem. \github: \url{https://github.com/wanjiaZhao1203/ZebraArena}.
\end{itemize}

Following the evaluation setup of Qwen3 and Qwen3.5, we allocate a large maximum output budget to allow sufficient reasoning. Specifically, we set the maximum generation length to 32,768 tokens for GSM8K, Math500, GPQA Diamond, HumanEval, LeetCode-Contest, MBPP, and BFCL v4, and 38,912 tokens for AIME 2024 and AIME 2025. For ZebraArena, we set the maximum generation length to 32,768 tokens for the small split, 65,536 tokens for the medium split, and 98,304 tokens for the large split. We repeat evaluations eight times and report the average accuracy for both \copt and baselines on the AIME 2024 and AIME 2025 benchmarks.

\subsection{Sequence Distributions for the KL Estimators}

We define the sequence distributions used by the KL estimators for fixed generated lengths.

\paragraph{For $\kappa_{\text{a}}$ in the draft answer stage.}

For a fixed draft length \(T_a\), the discrete-prefix continuation distribution over the draft answer is
\[
p_\theta(a_{1:T_a}\mid q)
:=
\prod_{t=1}^{T_a}
p_\theta(a_t \mid q, a_{<t}).
\]
The corresponding continuous-prefix continuation distribution is
\[
p_\theta^{e}(a_{1:T_a}\mid q)
:=
\prod_{t=1}^{T_a}
p_\theta(a_t \mid q, e_{<t}),
\]
where the discrete prefix preceding each draft token is replaced by the cached continuous embeddings.

\paragraph{For $\kappa_{\text{r}}$ in the on-policy thinking stage.}

We partition the on-policy thinking trajectory into chunks of length $C$. Let the $k$-th chunk start at position
\[
s_k := T_a + 1 + (k-1)C,
\]
and span positions $[s_k, s_k+C-1]$. Let $m_k$ denote the visibility state of the draft answer for the $k$-th chunk.

Conditioned on the visibility-controlled draft-answer input $a^{(m_k)}$ and the prefix $r_{T_a+1:s_k-1}$ preceding the current chunk, the student chunk-level continuation distribution is
\[
p_\theta(r_{s_k:s_k+C-1}\mid q, a^{(m_k)}, r_{T_a+1:s_k-1})
:=
\prod_{t=s_k}^{s_k+C-1}
p_\theta(r_t \mid q, a^{(m_k)}, r_{T_a+1:t-1}).
\]
The corresponding continuous-prefix intra-chunk continuation distribution is
\[
p_\theta^{e}(r_{s_k:s_k+C-1}\mid q, a^{(m_k)}, r_{T_a+1:s_k-1})
:=
\prod_{t=s_k}^{s_k+C-1}
p_\theta(r_t \mid q, a^{(m_k)}, r_{T_a+1:s_k-1}; e_{s_k:t-1}),
\]
where the already generated content inside the current chunk is replaced by the cached continuous embeddings $e_{s_k:t-1}$, while the prefix before the current chunk remains discrete.

\section{Additional Derivations for Answer-Relevant Uncertainty}
\label{app:answer_relevant_uncertainty}

In this section, we provide the full derivation and several consequences of \Cref{thm:answer_relevant_uncertainty_main}. The main message is that the reverse-KL estimator measures uncertainty that affects the next answer token, rather than uncertainty over latent reasoning states themselves.

\begin{proof}[Proof of \Cref{thm:answer_relevant_uncertainty_main}]

Recall the mixture-linear prefix model under \Cref{assump:mixture_soft_prefix}. The local reverse-KL contribution is
\[
\kappa(S,A)
=
\log P_S(A)-\log \bar P_w(A).
\]

Taking expectation gives
\[
\begin{aligned}
\mathbb{E}[\kappa(S,A)]
&=
\sum_{s\in\mathcal{S}}w(s)
\sum_a P_s(a)
\left[
\log P_s(a)-\log \bar P_w(a)
\right] \\
&=
\sum_{s\in\mathcal{S}}w(s)
D_{\mathrm{KL}}
\!\left(
P_s
\,\middle\|\,
\bar P_w
\right).
\end{aligned}
\]
Moreover, under the joint distribution
\[
\Pr(S=s,A=a)=w(s)P_s(a),
\]
the marginal distribution of $A$ is
\[
\Pr(A=a)
=
\sum_s w(s)P_s(a)
=
\bar P_w(a).
\]
Therefore,
\[
\begin{aligned}
I(S;A)
&=
\sum_{s,a}
w(s)P_s(a)
\log
\frac{\Pr(S=s,A=a)}
{\Pr(S=s)\Pr(A=a)}
\\
&=
\sum_{s,a}
w(s)P_s(a)
\log
\frac{P_s(a)}
{\bar P_w(a)}
\\
&=
\sum_s w(s)
D_{\mathrm{KL}}
\!\left(
P_s
\,\middle\|\,
\bar P_w
\right).
\end{aligned}
\]
Hence,
\[
\mathbb{E}[\kappa(S,A)]
=
I(S;A),
\]
which completes the proof.

\end{proof}

\paragraph{Harmless latent-state uncertainty.}

If all latent states in the support of $w$ induce the same next-token distribution,
\[
P_s=P_\star
\qquad
\text{for all }s\in\mathrm{supp}(w),
\]
then
\[
\bar P_w
=
\sum_s w(s)P_s
=
P_\star.
\]
Therefore,
\[
\kappa(S,A)
=
\log P_\star(A)-\log P_\star(A)
=
0
\]
almost surely. Thus, the reverse-KL score is zero even if the latent-state entropy $H(S)$ is large.

This formalizes the intuition that a continuous prefix may encode a superposition of many states and still be reliable. If all plausible states agree on the next-token distribution, then the uncertainty is irrelevant to the answer.

\paragraph{Stability under approximately equivalent states.}

More generally, suppose all state-conditioned distributions are close to a common distribution $P_\star$. Then
\[
\mathbb{E}[\kappa(S,A)]
\le
\sum_s w(s)
D_{\mathrm{KL}}
\!\left(
P_s
\,\middle\|\,
P_\star
\right).
\]

To see this, note that
\[
\begin{aligned}
\sum_s w(s)
D_{\mathrm{KL}}(P_s\|P_\star)
&=
\sum_s w(s)
\sum_a P_s(a)
\log\frac{P_s(a)}{P_\star(a)} \\
&=
\sum_s w(s)
\sum_a P_s(a)
\left[
\log\frac{P_s(a)}{\bar P_w(a)}
+
\log\frac{\bar P_w(a)}{P_\star(a)}
\right] \\
&=
\sum_s w(s)
\sum_a P_s(a)
\log\frac{P_s(a)}{\bar P_w(a)}
+
\sum_s w(s)
\sum_a P_s(a)
\log\frac{\bar P_w(a)}{P_\star(a)} \\
&=
\sum_s w(s)
D_{\mathrm{KL}}(P_s\|\bar P_w)
+
\sum_a
\left[
\sum_s w(s)P_s(a)
\right]
\log\frac{\bar P_w(a)}{P_\star(a)} \\
&=
\sum_s w(s)
D_{\mathrm{KL}}(P_s\|\bar P_w)
+
\sum_a
\bar P_w(a)
\log\frac{\bar P_w(a)}{P_\star(a)} \\
&=
\sum_s w(s)
D_{\mathrm{KL}}(P_s\|\bar P_w)
+
D_{\mathrm{KL}}(\bar P_w\|P_\star).
\end{aligned}
\]

Since KL divergence is nonnegative,
\[
\sum_s w(s)
D_{\mathrm{KL}}
\!\left(
P_s
\,\middle\|\,
\bar P_w
\right)
\le
\sum_s w(s)
D_{\mathrm{KL}}
\!\left(
P_s
\,\middle\|\,
P_\star
\right).
\]
By Theorem~\ref{thm:answer_relevant_uncertainty_main},
\[
\mathbb{E}[\kappa(S,A)]
=
\sum_s w(s)
D_{\mathrm{KL}}
\!\left(
P_s
\,\middle\|\,
\bar P_w
\right),
\]
which proves the claim.

\paragraph{Deterministic-token case.}

Suppose each latent state $s$ deterministically implies one answer token
\[
g(s)\in\mathcal{A},
\]
so that
\[
P_s(a)
=
\mathbbm{1}\{a=g(s)\}.
\]
Define the marginal probability of the answer token induced by the soft prefix state as
\[
\rho(a)
=
\Pr(g(S)=a)
=
\sum_{s:g(s)=a}w(s).
\]
Then
\[
\bar P_w(a)
=
\sum_s w(s)\mathbbm{1}\{a=g(s)\}
=
\rho(a).
\]
Since $A=g(S)$ deterministically,
\[
\kappa(S,A)
=
\log 1-\log \rho(g(S))
=
-\log \rho(g(S)).
\]
Taking expectation gives
\[
\mathbb{E}[\kappa(S,A)]
=
\sum_a \rho(a)(-\log \rho(a))
=
H(g(S)).
\]

Therefore, under the deterministic-answer setting, the reverse-KL score is exactly the entropy of the induced answer token $g(S)$ instead of the entropy of the latent state $S$.


\end{document}